\DeclareMathOperator*{\argmax}{argmax}
\DeclareMathOperator{\sgn}{sgn}
\DeclareMathOperator{\R}{R}
\DeclareMathOperator{\rect}{R}
\newcommand{\lbl}[1]{{\bf \tt #1}}
\newcommand{\sS}{\mathcal{S}}
\newcommand{\bz}{\mathbf{z}}
\newtheorem{theorem}{Theorem}
\title{Learning Constraints for Structured Prediction Using Rectifier Networks}
\author{Xingyuan~Pan, Maitrey~Mehta, Vivek~Srikumar \\
  School of Computing, University of Utah \\
  \texttt{\{xpan,maitrey,svivek\}@cs.utah.edu 
  }\\}
\date{}
\begin{document}
\maketitle

\begin{abstract}
 Various natural language processing tasks are structured prediction problems where outputs are constructed with multiple interdependent decisions. Past work has shown that domain knowledge, framed as constraints over the output space, can help improve predictive accuracy. However, designing good constraints often relies on domain expertise. In this paper, we study the problem of learning such constraints. We frame the problem as that of training a two-layer rectifier network to identify valid structures or substructures, and show a construction for converting a trained network into a system of linear constraints over the inference variables. Our experiments on several NLP tasks show that the learned constraints can improve the prediction accuracy, especially when the number of training examples is small.
\end{abstract}

\section{Introduction}
In many natural language processing (NLP) tasks, the outputs are structures which can take the form of sequences, trees, or in general, labeled graphs. Predicting such output structures~\citep[e.g.][]{Smith:2011} involves assigning values to multiple interdependent variables. Certain joint assignments may be prohibited by constraints designed by domain experts. As a simple example, in the problem of extracting entities and relations from text, a constraint could disallow the relation ``married to'' between two entities if one of the entity is not a ``person''.
It has been shown that carefully designed constraints can substantially improve  model performance in various  applications~\citep[e.g.,][]{Chang2012,Anzaroot2014}, especially when the number of training examples is limited.

Designing constraints often requires task-specific manual effort. In this paper, we ask the question: \emph{can we use neural network methods to automatically discover constraints from data, and use them to predict structured outputs?} We provide a general framework for discovering constraints in the form of a system of linear inequalities over the output variables in a problem. These constraints can improve an already trained model, or be integrated into the learning process for global training.

A system of linear inequalities represents a bounded or unbounded convex polytope. We observe that such a system can be expressed as a two-layer threshold network, i.e., a network with one hidden layer of linear threshold units and an output layer with a single threshold unit. This two-layer threshold network will predict $1$ or $-1$ depending on whether the system of linear inequalities is satisfied or not. In principle, we could try to train such a threshold network to discover constraints. However, the zero-gradient nature of the threshold activation function prohibits using backpropagation for gradient-based learning.

Instead, in this paper, we show that a construction of a specific two-layer rectifier network represents linear inequality constraints. This network also contains a single linear threshold output unit, but in the hidden layer, it contains rectified linear units (ReLUs). \citet{Pan2016} showed that a two-layer rectifier network constructed in such a way is equivalent to a threshold network, and represents the same set of linear inequalities as the threshold network with far fewer hidden units. 

The linear constraints thus obtained can augment existing models in multiple ways. For example, if a problem is formulated as an integer program~\cite[e.g.,][]{Roth2004,Roth2005a,Riedel2006,Martins2009}, the learned constraints will become additional linear inequalities, which can be used directly. Alternatively, a structure can be constructed using graph search~\cite[e.g.,][]{Collins2004,Daume2009,doppa2014structured,Chang2015,Wiseman2016}, in which case the learned constraints can filter available actions during search-node expansions. Other inference techniques that extend Lagrangian Relaxation~\citep{Komodakis2007,Rush2010,martins2011augmented} can also employ the learned constraints. Essentially, the learned constraints can be combined with various existing models and inference techniques and the framework proposed in this paper can be viewed as a general approach to improve structured prediction.

We report experiments on three NLP tasks to verify the proposed idea.
The first one is an entity and relation extraction task, in which we aim to label the entity candidates and identify relations between them. In this task, we show that the learned constraints can be used while training the model to improve prediction. We also show that the learned constraints in this domain can be interpreted in a way that is comparable to manually designed constraints. 
The second NLP task is to extract citation fields like authors, journals and date from a bibliography entry. We treat it as a sequence labeling problem and show that learned constraints can improve an existing first-order Markov model trained using a structured SVM method~\citep{Tsochantaridis2004}. In the final experiment we consider chunking, i.e., shallow parsing, which is also a sequence labeling task. We train a BiLSTM-CRF model~\cite{huang2015bidirectional} on the training set with different sizes, and we show that learned constraints are particularly helpful when the number of training examples is small.

In summary, the contributions of this paper are:
\begin{enumerate}[nosep]
\item We propose that rectifier networks can be used to represent and learn linear constraints for structured prediction problems.

\item In tasks such as entity and relation extraction, the learned constraints can exactly recover the manually designed constraints, and can be interpreted in a way similar to manually designed constraints.

\item When manually designed constraints are not available, we show via
  experiments that the learned constraints can improve the original model's
  performance, especially when the original model is trained with a small
  dataset.\footnote{The scripts for replaying the experiments are available at \href{https://github.com/utahnlp/learning-constraints}{ https://github.com/utahnlp/learning-constraints}}
\end{enumerate}

\section{Representing Constraints}

In this section, we  formally define structured prediction and constraints. In a structured prediction problem, we are given an input $\mathbf{x}$ belonging to the instance space, such as sentences or images. The goal is to predict an output $\mathbf{y} \in \mathcal{Y}_\mathbf{x}$, where $\mathcal{Y}_\mathbf{x}$ is the set of possible output structures for the input $\mathbf{x}$. The output $\mathbf{y}$ have a predefined structure (e.g., trees, or labeled graphs), and the number of candidate structures in $\mathcal{Y}_\mathbf{x}$ is usually large, i.e., exponential in the input size.

Inference in such problems can be framed as an optimization problem with a linear objective function:
\begin{equation}\label{general-argmax}
\mathbf{y}^* = \argmax_{\mathbf{y}\in\mathcal{Y}_\mathbf{x}} \boldsymbol{\alpha} \cdot \boldsymbol{\phi}(\mathbf{x}, \mathbf{y}),
\end{equation} 
where $\boldsymbol{\phi}(\mathbf{x}, \mathbf{y})$ is a feature vector representation of the input-output pair $\mathbf{(x,y)}$ and $\boldsymbol{\alpha}$ are learned parameters. The feature representation $\boldsymbol{\phi}(\mathbf{x}, \mathbf{y})$ can be designed by hand or learned using neural networks. The feasible set $\mathcal{Y}_\mathbf{x}$ is predefined and known for every $\mathbf{x}$ at both learning and inference stages. The goal of learning is to find the best parameters $\boldsymbol{\alpha}$ (and, also perhaps the features $\boldsymbol{\phi}$ if we are training a neural network) using training data, and the goal of inference is to solve the above argmax problem given parameters $\boldsymbol\alpha$.

In this paper, we seek to learn additional constraints from training examples $\{\mathbf{(x,y)}\}$. Suppose we want to learn $K$ constraints, and the $k^{\text{th}}$ one is some Boolean function\footnote{We use $1$ to indicate {\tt true} and $-1$ to indicate {\tt false}.}: $c_k(\mathbf{x,y})=1$ if $(\mathbf{x, y})$ satisfies the $k^{th}$ constraint, and $c_k(\mathbf{x,y})=-1$ if it does not.
Then, the optimal structure $\mathbf{y}^*$ is the solution to the
following optimization problem:
\begin{align}
&\max_{\mathbf{y}\in\mathcal{Y}_\mathbf{x}} \boldsymbol\alpha \cdot \boldsymbol\phi(\mathbf{x}, \mathbf{y}),\label{constrainted-argmax}\\ 
&\text{subject to}\quad \forall k, c_k(\mathbf{x, y}) = 1.\nonumber
\end{align} 
We will show that such learned constraints aid prediction performance.

\subsection{Constraints as Linear Inequalities}

Boolean functions over inference variables may be expressed as linear inequalities over them~\citep{Roth2004}. In this paper, we represent constraints as linear inequalities over some feature vector $\boldsymbol\psi(\mathbf{x, y})$ of a given input-output pair. The $k^\text{th}$ constraint $c_k$  is equivalent to the linear inequality
\begin{equation}\label{threshold_constraint}
\mathbf{w}_k \cdot \boldsymbol{\psi}(\mathbf{x, y}) + b_k \ge 0,
\end{equation}
whose weights $\mathbf{w}_k$ and bias $b_k$ are learned. A Boolean constraint is, thus, a linear threshold function,
\begin{equation}\label{ltu}
c_k(\mathbf{x,y}) = \sgn \big(\mathbf{w}_k \cdot \boldsymbol{\psi}(\mathbf{x, y}) + b_k\big).
\end{equation}
Here, $\sgn(\cdot)$ is the sign function: $\sgn(x) = 1$ if $x\ge 0$, and $-1$ otherwise.

The feature representations $\boldsymbol\psi(\mathbf{x, y})$ should not be confused with the original features $\boldsymbol\phi(\mathbf{x}, \mathbf{y})$ used in the structured prediction model in Eq.~\eqref{general-argmax} or~\eqref{constrainted-argmax}. Hereafter, we refer to $\boldsymbol\psi(\mathbf{x, y})$ as \emph{constraint features}. Constraint features should be general properties of inputs and outputs, since we want to learn domain-specific constraints over them. They are a design choice, and in our experiments, we will use common NLP features. In general, they could even be learned using a neural network. Given a constraint feature representation $\boldsymbol\psi(\cdot)$, the goal is thus to learn the parameters $\mathbf{w_k}$'s and $b_k$'s for every constraint.

\subsection{Constraints as Threshold Networks}

For an input $\mathbf{x}$, we say the output $\mathbf{y}$ is feasible if it satisfies constraints $c_k$ for all $k=1,\dots,K$. We can define a Boolean variable $z(\mathbf{x,y})$ indicating whether $\mathbf{y}$ is feasible with respect to the input $\mathbf{x}$:
$z(\mathbf{x,y}) = c_1(\mathbf{x,y})\land \dots \land c_K(\mathbf{x,y})$.
That is, $z$ is a conjunction of all the Boolean functions corresponding to each constraint. Since conjunctions are linearly separable, we can rewrite $z(\mathbf{x,y})$ as a linear threshold function:
\begin{equation}\label{lt}
z(\mathbf{x,y}) = \sgn\Big(1-K + \sum_{k=1}^K  c_k(\mathbf{x,y})\Big).
\end{equation}
It is easy to see that $z(\mathbf{x,y})=1$ if, and only if, all $c_k$'s are $1$---precisely the definition of a conjunction. Finally, we can plug Eq.~\eqref{ltu} into Eq.~\eqref{lt}:
\begin{equation}\label{threshold}
z = \sgn\Big(1-K 
+ \sum_{k=1}^K \sgn \big(\mathbf{w}_k \cdot \boldsymbol{\psi}(\mathbf{x, y}) + b_k\big)\Big)
\end{equation}

Observe that Eq.~\eqref{threshold} is exactly a two-layer threshold neural network: $\boldsymbol{\psi}(\mathbf{x, y})$ is the input to the network; the hidden layer contains $K$ linear threshold units with parameters $\mathbf{w}_k$ and $b_k$; the output layer has a single linear threshold unit. This neural network will predict $1$ if the structure $\mathbf{y}$ is feasible with respect to input $\mathbf{x}$, and $-1$ if it is infeasible. In other words, constraints for structured prediction problems can be written as two-layer threshold networks. One possible way to learn constraints is thus to learn the hidden layer parameters $\mathbf{w}_k$ and $b_k$, with fixed output layer parameters. However, the neural network specified in Eq.~\eqref{threshold} is not friendly to gradient-based learning; the $\sgn(\cdot)$ function has zero gradients almost everywhere. To circumvent this, let us explore an alternative way of learning constraints using rectifier networks rather than threshold networks.

\subsection{Constraints as Rectifier Networks}

We saw in the previous section that a system of linear inequalities can be represented as a two-layer threshold network. In this section, we will see a special rectifier network that is equivalent to a system of linear inequalities, and whose parameters can be learned using backpropagation.

Denote the rectifier (ReLU) activation function as $\R(x)= \max(0, x)$. Consider the following two-layer rectifier network:
\begin{equation}\label{relu_network}
z= \sgn\Big(1- \sum_{k=1}^K \R \big(\mathbf{w}_k \cdot \boldsymbol{\psi}(\mathbf{x, y}) + b_k\big)\Big)
\end{equation}
The input to the network is still $\boldsymbol{\psi}(\mathbf{x, y})$. There are $K$ ReLUs in the hidden layer, and one threshold unit in the output layer.
The decision boundary of this rectifier network is specified by a system of linear inequalities. In particular, we have the following theorem \citep[Theorem~1]{Pan2016}:
\begin{theorem}\label{theorem}
Consider a two-layer rectifier network with $K$ hidden ReLUs as in Eq.~\eqref{relu_network}. Define the set $[K]=\{1,2,\dots,K\}$. The network output $z(\mathbf{x,y})=1$ if, and only if, for every subset $\mathcal{S}$ of $[K]$, the following linear inequality holds:
\begin{equation}\label{ineqs}
1-\sum_{k \in \mathcal{S}}\big(\mathbf{w}_k \cdot \boldsymbol{\psi}(\mathbf{x, y}) + b_k\big) \ge 0
\end{equation}
\end{theorem}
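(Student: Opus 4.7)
The plan is to abbreviate $a_k = \mathbf{w}_k \cdot \boldsymbol{\psi}(\mathbf{x},\mathbf{y}) + b_k$ so that the network rule becomes $z = 1 \iff \sum_{k=1}^K \R(a_k) \le 1$, and the subset inequality~\eqref{ineqs} becomes $\sum_{k \in \mathcal{S}} a_k \le 1$. I would then establish the biconditional by proving the two implications separately, since each reduces to a short inequality chain once the right quantities are named.

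For the ``only if'' direction I would fix an arbitrary $\mathcal{S} \subseteq [K]$ and chain
\[
\sum_{k \in \mathcal{S}} a_k \;\le\; \sum_{k \in \mathcal{S}} \R(a_k) \;\le\; \sum_{k=1}^K \R(a_k) \;\le\; 1,
\]
using the pointwise bound $a_k \le \max(0,a_k) = \R(a_k)$ for the first step, the nonnegativity of $\R$ to restore the indices outside $\mathcal{S}$ for the second, and the hypothesis $z=1$ for the third. The empty-set case collapses to the trivial $0 \le 1$.

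For the ``if'' direction the key move is to choose the tight subset $\mathcal{S}^\ast = \{k \in [K] : a_k > 0\}$. On this set $\R(a_k) = a_k$, and off it $\R(a_k) = 0$, hence
\[
\sum_{k=1}^K \R(a_k) \;=\; \sum_{k \in \mathcal{S}^\ast} a_k \;\le\; 1,
\]
where the last inequality is the hypothesis~\eqref{ineqs} instantiated at $\mathcal{S}^\ast$ (or, when $\mathcal{S}^\ast = \emptyset$, the trivial statement $0 \le 1$). This immediately yields $z=1$.

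The whole argument hinges on a single observation: $\sum_{k=1}^K \R(a_k)$ equals the maximum of $\sum_{k \in \mathcal{S}} a_k$ over all subsets $\mathcal{S} \subseteq [K]$, with the maximum attained at $\mathcal{S}^\ast$. Once this is spotted, both directions are one-line inequality chains, so I do not expect a real obstacle beyond recognizing the correct witness subset and handling the empty-set edge case cleanly.
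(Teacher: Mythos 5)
Your proposal is correct and follows essentially the same argument as the paper's proof: the same abbreviation $a_k$, the same inequality chain $\sum_{k\in\mathcal{S}}a_k\le\sum_{k\in\mathcal{S}}\R(a_k)\le\sum_{k=1}^K\R(a_k)$ for one direction, and the same witness subset of indices with nonnegative (equivalently, positive) $a_k$ for the other. The only cosmetic difference is using $a_k>0$ rather than $a_k\ge 0$ to define the tight subset, which is immaterial since $\R(0)=0$.
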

The proof of Theorem~\ref{theorem} is given in the supplementary material.

To illustrate the idea, we  show a simple example rectifier network, and convert it to a system of linear inequalities using the theorem. The rectifier network contains two hidden ReLUs ($K=2$): 
\begin{equation*}\label{example_relu}
z = \sgn\Big(1- \R \big(\mathbf{w}_1 \cdot \boldsymbol{\psi} + b_1\big) 
- \R \big(\mathbf{w}_2 \cdot \boldsymbol{\psi} + b_2\big)\Big)
\end{equation*}
Our theorem says that $z=1$ if and only if the following four inequalities hold simultaneously, one per subset of $[K]$:
\begin{align*}
\begin{cases}
1 \ge 0 \\
1 - \big(\mathbf{w}_1 \cdot \boldsymbol{\psi} + b_1\big) \ge 0  \\
1 - \big(\mathbf{w}_2 \cdot \boldsymbol{\psi} + b_2\big) \ge 0 \\
1- \big(\mathbf{w}_1 \cdot \boldsymbol{\psi} + b_1\big)- \big(\mathbf{w}_2 \cdot \boldsymbol{\psi} + b_2\big) \ge 0 
\end{cases}
\end{align*}
The first inequality, $1 \ge 0$, corresponding to the empty subset of $[K]$, trivially holds. The rest are just linear inequalities over $\boldsymbol{\psi}$.

In general, $[K]$ has $2^K$ subsets, and when $\mathcal{S}$ is the empty set, inequality~\eqref{ineqs} is trivially true. The rectifier network in Eq.~\eqref{relu_network} thus predicts $\mathbf{y}$ is a valid structure for $\mathbf{x}$, if a system of $2^K-1$ linear inequalities are satisfied. It is worth mentioning that even though the $2^K-1$ linear inequalities are constructed from a power set of $K$ elements, it does not make them dependent on each other. With general choice of $\mathbf{w}_k$ and $b_k$, these $2^K-1$ inequalities are linearly independent.

This establishes the fact that a two-layer rectifier network of the form of Eq.~\eqref{relu_network} can represent a system of linear inequality constraints for a structured prediction problem via the constraint feature function $\boldsymbol\psi$.

\section{Learning Constraints}

In the previous section, we saw that both threshold and rectifier networks can represent a system of linear inequalities. We can either learn a threshold network (Eq.~\eqref{threshold}) to obtain constraints as in~\eqref{threshold_constraint}, or we can learn a rectifier network  (Eq.~\eqref{relu_network}) to obtain constraints as in~\eqref{ineqs}.  The latter offers two advantages. First, a rectifier network has non-trivial gradients, which facilitates gradient-based learning\footnote{The output threshold unit in the rectifier network will not cause any trouble in practice, because it can be replaced by sigmoid function during training. Our theorem still follows, as long as we interpret $z(\mathbf{x,y})= 1$ as $\sigma(\mathbf{x,y})\ge 0.5$ and $z(\mathbf{x,y})= -1$ as $\sigma(\mathbf{x,y})< 0.5$. We can still convert the rectifier network into a system of linear inequalities even if the output unit is the sigmoid unit.}. Second, since $K$ ReLUs can represent $2^K-1$ constraints, the rectifier network can express constraints more compactly with fewer hidden units.

We will train the parameters $\mathbf{w}_k$'s and $b_k$'s of the rectifier network in the supervised setting. First, we need to obtain positive and negative training examples. We assume that we have training data for a structured prediction task.

\textbf{Positive examples} can be directly obtained from the training data of the structured prediction problem. For each training example $(\mathbf{x,y})$, we can apply constraint feature extractors to obtain positive examples of the form $(\psi(\mathbf{x,y}), +1)$.

\textbf{Negative examples} can be generated in several ways; we use simple but effective approaches. We can slightly perturb a structure $\mathbf{y}$ in a training example $(\mathbf{x,y})$ to obtain a structure $\mathbf{y}'$ that we assume to be invalid. Applying the constraint feature extractor to it gives a negative example $(\psi(\mathbf{x,y'}), -1)$.  We also need to ensure that $\psi(\mathbf{x,y'})$ is indeed different from any positive example. Another approach is to perturb the feature vector $\psi(\mathbf{x,y})$ directly, instead of perturbing the structure $\mathbf{y}$.

In our experiments in the subsequent sections, we will use both methods to generate negative
examples, with detailed descriptions in the supplementary material. Despite their simplicity, we observed performance improvements. Exploring more sophisticated methods for perturbing structures or features (e.g., using techniques explored by \citet{Smith2005}, or using adversarial learning~\citep{goodfellow2014generative}) is a future research direction.

To verify whether constraints can be learned as described here, we performed a synthetic experiment where 
we randomly generate many integer linear program (ILP) instances with \emph{hidden} shared constraints. The experiments show that constraints can indeed be recovered using only the solutions of the programs. Due to space constraints, details of this synthetic experiment are in the supplementary material. In the remainder of the paper we focus on three real NLP tasks.

\section{Entity and Relation Extraction Experiments}
In the task of entity and relation extraction, we are given a sentence with entity candidates. We seek to determine the type of each candidate, as in the following example (the labels are underlined):
\begin{quote}
[\underline{\lbl{Organization}} Google LLC] is headquartered in [\underline{\lbl{Location}} Mountain View, California].
\end{quote}

We also want to determine directed relations between the entities. In the above example, the relation from ``Google LLC'' to ``Mountain View, California'' is \lbl{OrgBasedIn}, and the opposite direction is labeled \lbl{NoRel}, indicating there is no relation.
This task requires predicting a directed graph and represents a typical structured prediction problem---we cannot make isolated entity and relation predictions.

\textbf{Dataset and baseline}:
We use the dataset from \cite{Roth2004}. It contains 1441 sentences with labeled entities and relations. There are three possible entity types: \lbl{Person}, \lbl{Location} and \lbl{Organization}, and five possible relations: \lbl{Kill}, \lbl{LiveIn}, \lbl{WorkFor}, \lbl{LocatedAt} and \lbl{OrgBasedIn}. Additionally, there is a special entity label \lbl{NoEnt} meaning a text span is not an entity, and a special relation label \lbl{NoRel} indicating that two spans are unrelated.

We used 70\% of the data for training  and the remaining 30\% for evaluation. We trained our baseline model using the integer linear program (ILP) formulation with the same set of features as in \cite{Roth2004}. The baseline system includes manually designed constraints from the original paper. An example of such a  constraint is: if a relation label is \lbl{WorkFor}, the source entity must be labeled \lbl{Person}, and the target entity must be labeled \lbl{Organization}. For reference, the supplementary material lists the complete set of manually designed constraints.

We use three kinds of constraint features: (i) source-relation indicator, which looks at a given relation label and the label of its source entity; (ii) relation-target indicator, which looks at a relation label and the label of its target entity; and (iii) relation-relation indicator, which looks at a pair of entities and focuses on the two relation label, one in each direction. The details of the constraint features, negative examples and hyper-parameters are in the supplementary material.

\subsection{Experiments and Results}
We compared the performance of two ILP-based models, both trained in the presence of constraints with a structured SVM. One model was trained with manually designed constraints and the other used learned constraints.  These models are compared in Table~\ref{er}.

\begin{table}[htbp]
\centering
\begin{tabular}{lrr}
\toprule
\bf Performance Metric & \bf Designed & \bf Learned \\ 
\midrule
entity F-1             & $84.1\%$        & $83.1\%$       \\
relation F-1           & $41.5\%$        & $38.2\%$       \\
\midrule
\end{tabular}
\caption{Comparison of performance on the entity and relation extraction task, between two ILP models, one trained with designed constraints (\textbf{Designed}) and one with learned constraints (\textbf{Learned}).}
\label{er}
\end{table}

We manually inspected the learned constraints and discovered that they exactly recover the designed constraints, in the sense that the feasible output space is exactly the same regardless of whether we use  designed or learned constraints.  As an additional confirmation, we observed that when a model is trained with designed constraints and tested with learned constraints, we get the same model performance as when tested with designed constraints. Likewise, a model that is trained with learned constraints performs identically when tested with learned and designed constraints. 

Below, we give one example of a learned constraint, and illustrate how to interpret such a constraint. (The complete list of learned constraints is in the supplementary material.) A learned constraint using the source-relation indicator features is

\begin{equation}
\begin{split}
& -1.98 x_1 + 3.53 x_2-1.90 x_3 +0.11 x_4 \\
&+ 2.66x_5-2.84x_6-2.84x_7-2.84x_8 \\
&+ 2.58x_9 + 0.43 x_{10} + 0.32 \ge 0
\end{split}
\label{example_ineq}
\end{equation}
where $x_1$ through $x_{10}$ are indicators for labels \lbl{NoEnt}, \lbl{Person}, \lbl{Location}, \lbl{Organization}, \lbl{NoRel}, \lbl{Kill}, \lbl{LiveIn}, \lbl{WorkFor}, \lbl{LocatedAt}, and \lbl{OrgBasedIn}, respectively. This constraint disallows a relation labeled as \lbl{Kill} having a source entity labeled as \lbl{Location}, because $-1.90-2.84+0.32<0$. Therefore, the constraint ``\lbl{Location} cannot \lbl{Kill}'' is captured in \eqref{example_ineq}. In fact, it is straightforward to verify that the inequality in~\eqref{example_ineq} captures many more constraints such as ``\lbl{NoEnt} cannot \lbl{LiveIn}'', ``\lbl{Location} cannot \lbl{LiveIn}'', ``\lbl{Organization} cannot \lbl{WorkFor}'', etc. A general method for interpreting learned constraints is a direction of future research.

Note that the metric numbers in Table~\ref{er} based on learned constraints are lower than those based on designed constraints. Since the feasible space is the same for both kinds of constraints, the performance difference is due to the randomness of the ILP solver picking different solutions with the same objective value. Therefore, the entity and relation experiments in this section demonstrate that our approach can recover the designed constraints and provide a way of interpreting these constraints.

\section{Citation Field Extraction Experiments}

In the citation field extraction task, the input is a citation entry. The goal is to identify spans corresponding to fields such as author, title, etc. In the example below, the labels are underlined:
\begin{quote}
\noindent
 [ \underline{\lbl{Author}} A . M . Turing . ]
 [ \underline{\lbl{Title}} Computing machinery and intelligence . ] 
 [ \underline{\lbl{Journal}} Mind , ] 
 [\underline{\lbl{Volume}} 59 , ] 
 [ \underline{\lbl{Pages}} 433-460 . ] 
 [ \underline{\lbl{Date}} October , 1950 . ]
\end{quote}
\citet{Chang2007} showed that \emph{hand-crafted} constraints specific to this domain can vastly help models to correctly identify citation fields. We show that constraints learned from the training data can improve a trained model without the need for manual effort.

\textbf{Dataset and baseline.} We use the dataset from \citet{Chang2007,Chang2012} whose training, development and test splits have 300, 100 and 100 examples, respectively. We train a first-order Markov model using structured SVM~\citep{Tsochantaridis2004} on the training set with the same raw text features as in the original work.


\paragraph{Constraint features.}
We explore multiple simple constraint features $\boldsymbol{\psi}(\mathbf{x, y})$ in the citation field extraction experiments as shown in~Table~\ref{tab:constraint-features}. Detailed descriptions of these features, including how to develop negative examples for each feature, and experiment settings are in the supplementary material.
\begin{table}[htbp]
\centering
\begin{tabular}{lp{0.23\textwidth}}
\toprule
\bf Feature & \bf Description \\ 
\midrule
Label existence & Indicates which labels exist in a citation       \\ 
Label counts   & Counts the number of occurrences of a label       \\
Bigram labels   & Indicators for adjacent labels  \\
Trigram labels & Indicators for $3$ adjacent labels \\
Part-of-speech  & Indicator for the part-of-speech of a token \\
Punctuation    & Indicator for whether a token is a punctuation \\
\bottomrule
\end{tabular}
\caption{Constraint feature templates for the citation field extraction task}
\label{tab:constraint-features}
\end{table}

\subsection{Experiments and Results}

For each constraint feature template, we trained a rectifier network with 10 ReLUs in the hidden layer. We then use Theorem~\ref{theorem} to convert the resulting network to a system of $2^{10}-1$, or 1023 linear inequalities. We used beam search with beam size 50 to combine the learned inequalities with the original sequence model to predict on the test set. States in the search space correspond to partial assignments to a prefix of the sequence. Each step we predict the label for the next token in the sequence. The pretrained sequence model (i.e., the baseline) ranks search nodes based on transition and emission scores, and the learned inequality prunes the search space accordingly\footnote{Since the label-existence and label-counts features are global, pruning by learned inequalities is possible only at the last step of search. The other four features admit pruning at each step of the search process.}. Table~\ref{citation} shows the token level accuracies of various methods.

\begin{table*}[htbp]
\centering
\begin{tabular}{lllllllllll}
\toprule
\multicolumn{2}{c}{\bf Baselines} &\multicolumn{6}{c}{\bf Search with learned constraints} &\multicolumn{3}{c} {\bf Combine constraints}\\
\cmidrule(r){1-2}\cmidrule(rl){3-8} \cmidrule(l){9-11}
  \bf Exact & \bf Search & \bf L.E. & \bf L.C. & \bf B.L. & \bf T.L. & \bf POS & \bf Punc. & \bf C1 & \bf C2 & \bf C3 \\ 
  86.2  & 87.3     & 88.0& 87.7 & 87.9  & 88.1 &89.8 &90.2 & 88.6& 90.1 & 90.6         \\ 
\midrule
\end{tabular}
\caption{Token level accuracies (in percentage) of baseline models and constrained-search models, for the citation field extraction task. \textbf{Exact} is our trained first-order Markov model. It uses exact inference (dynamic programming) for prediction. \textbf{Search} is our search baseline, it uses the same model as \textbf{Exact}, but with beam search for inexact inference. \textbf{L.E., L.C., B.L., T.L., POS, Punc.} use search with different constraint features: label existence, label counts, bigram labels, trigram labels, part-of-speech, and punctuation features. \textbf{C1} to \textbf{C3} are search with combined constraints. \textbf{C1} combines \textbf{L.E.} and \textbf{T.L.}. \textbf{C2} combines \textbf{L.E.}, \textbf{T.L.} and \textbf{POS}. Finally \textbf{C3} combines all constraints.}
\label{citation}
\end{table*}

The results show that all versions of constrained search outperform the baselines, indicating that the learned constraints are effective in the citation field extraction task. Furthermore, different constraints learned with different features can be combined. We observe that combining different constraint features generally improves accuracy.

It is worth pointing out that the label existence and label counts features are global in nature and cannot be directly used to train a sequence model. Even if some constraint features can be used in training the original model, it is still beneficial to learn constraints from them. For example, the bigram label feature is captured in the original first order model, but adding constraints learned from them still improves performance. As another test, we trained a model with POS features, which also contains punctuation information. This model achieves 91.8\% accuracy. Adding constraints learned with POS improves the accuracy to 92.6\%; adding constraints learned with punctuation features further improves it to 93.8\%.

We also observed that our method for learning constraints is robust to the choice of the number of hidden ReLUs. For example, for punctuation, learning using 5, 8 and 10 hidden ReLUs results an accuracy of $90.1\%$, $90.3\%$, and $90.2\%$, respectively. We observed similar behavior for other constraint features as well. Since the number of constraints learned is exponential in the number of hidden units, these results shows that learning redundant constraints will not hurt performance.


Note that carefully hand-crafted constraints may achieve higher accuracy than the learned ones. \citet{Chang2007} report an accuracy of 92.5\% with constraints specifically designed for this domain. In contrast, our method for learning constraints uses general constraint features, and does not rely on domain knowledge. Therefore, our method is suited to tasks where little is known about the underlying domain.

\section{Chunking Experiments}
Chunking is the task of clustering text into groups of syntactically correlated tokens or phrases. In the instance below, the phrase labels are underlined: 
\begin{quote}
[\underline{\lbl{NP}} An A.P. Green official] [\underline{\lbl{VP}} declined to comment] [\underline{\lbl{PP}} on] [\underline{\lbl{NP}} the filing] [\underline{\lbl{O}}.]
\end{quote}
We treat the chunking problem as a sequence labeling problem by using the popular IOB tagging scheme.
For each phrase label, the first token in the phrase is labeled with a ``B-'' prefixed to phrase label while the other tokens are labeled with an ``I-'' prefixed to the phrase label. Hence, 
\begin{quote}
[\underline{\lbl{NP}} An A.P. Green official]
\end{quote} is represented as
\begin{quote}
    [[\underline{\lbl{B-NP}} An] [\underline{\lbl{I-NP}} A.P.] [\underline{\lbl{I-NP}} Green] [\underline{\lbl{I-NP}} official]]
\end{quote} 
This is done for all phrase labels except ``O".

\textbf{Dataset and Baselines.} We use the CoNLL2000 dataset \cite{tjong-kim-sang-buchholz-2000-introduction} which contains 8936 training sentences and 2012 test sentences. For our experiments, we consider 8000 sentences out of 8936 training sentences as our training set and the remaining 936 sentences as our development set. Chunking is a well-studied problem and showing performance improvements on full training dataset is difficult. However, we use this task to illustrate the interplay of learned constraints with neural network models, and the impact of learned constraints in the low training data regime. 

We use the BiLSTM-CRF \cite{huang2015bidirectional} for this sequence tagging task. We use GloVe for word embeddings. We do not use the BERT~\cite{devlin2019bert} family of models since tokens are broken down into sub-words during pre-processing, which introduces modeling and evaluation choices that are orthogonal to our study of label dependencies. As with the citation task, all our constrained models use beam search, and we compare our results to both exact decoding and beam search baselines. We use two kinds of constraint features:  (i) $n$-gram label existence, and (ii) $n$-gram part of speech. Details of the constraint features and construction of negative samples are given in the supplementary material.


\subsection{Experiments and Results}
We train the rectifier network with 10 hidden units. The beam size of 10 was chosen for our experiments based on preliminary experiments. We report the average results on two different random seeds for learning each constraint. Note that the $n$-gram label existence is a global constraint while the $n$-gram POS constraint is a local constraint which checks for validity of label assignments at each token. In essence, the latter constraint reranks the beam at each step by ensuring that states that satisfy the constraint are preferred over states that violate the constraint. 
Since the $n$-gram label existence is a global constraint, we check the validity of the tag assignments only at the last token. In the case where none of the states in the beam satisfy the constraint, the original beams are used. 

\begin{table*}[!htbp]
\centering
\begin{tabular}{@{}cccccccc@{}}
\toprule
\multirow{2}{*}{\textbf{Constraint}}                      & \multirow{2}{*}{$n$}                 & \multicolumn{6}{c}{\textbf{Percentage of training data used}} \\ \cmidrule(l){3-8} 
                                                          &                                             & \textbf{1\%} & \textbf{5\%} & \textbf{10\%} & \textbf{25\%} & \textbf{50\%} & \textbf{100\%} \\ \midrule
\multirow{2}{*}{\textbf{Label existence}}                 & \textbf{2}                                  & 81.28       & 88.30       & 89.73      & 91.24       & 90.40      & 92.48    \\
                                                          & \textbf{3}                                  & 80.98       & 88.20       & 90.58      & 91.20       & 92.37      & 93.12    \\ \midrule
\multirow{2}{*}{\textbf{Part-of-speech}}                  & \textbf{3}                                  & 86.52       & 90.74       & 91.80      & 92.41       & 93.07      & 93.84    \\
                                                          & \textbf{4}                                  & 84.21       & 90.99       & 92.17      & 92.46       & 93.08      & 93.93    \\ \midrule
\multicolumn{2}{c}{\textbf{\begin{tabular}[c]{@{}l@{}}Search without constraints\end{tabular}}} & 81.29       & 88.27       & 90.62      & 91.33       & 92.51      & 93.44    \\ \midrule
\multicolumn{2}{
c}{\textbf{Exact decoding}}                                                           & 82.11       & 88.70       & 90.49      & 92.57       & 93.94      & 94.75    \\ \bottomrule
\end{tabular}
\caption{Token level accuracies (in percentage) for the chunking  baseline and constrained model. The results are shown on $n$-gram Label Existence and $n$-gram Part of Speech constraints with $n=\{2,3\}$ and $n= \{3,4\}$ respectively. The results are shown on $\{1\%, 5\%, 10\%, 25\%, 50\%, 100\%\}$  of training data. Exact decoding with Viterbi algorithm and Search w/o constraint are baseline models which do not incorporate constraints during inference. }
\label{chunking-exp}
\end{table*}

The results for this set of experiments are presented in Table \ref{chunking-exp}. We observe that the POS constraint improves the performance of the baseline models significantly, outperforming the beam search baseline on all training ratios. More importantly, the results show sizable improvements in accuracy for smaller training ratios (e.g, $4.41\%$ and $5.23\%$ improvements on exact and search baselines respectively with 1\% training data ). When the training ratios get bigger, we expect the models to learn these properties and hence the impact of the constraints decreases.

These results (along with the experiments in the previous sections) indicate that our constraints can significantly boost performance in the low data regime. Another way to improve performance in low resource settings is to use better pretrained input representations. When we replaced GloVe embeddings with ELMo,
we observed a $87.09\%$ accuracy on 0.01 ratio of training data using exact decoding. However, this improvement comes at a cost: the number of parameters increases from 3M (190k trainable) to 94M (561k trainable). In contrast, our method instead introduces a smaller rectifier network with $\approx1000$ additional parameters while still producing similar improvements. In other words, using trained constraints is computationally more efficient. 

We observe that the label existence constraints, however, do not help. We conjecture that this may be due to one of the following three conditions: (i) The label existence constraint might not exist for the task; (ii) The constraint exists but the learner is not able to find it; (iii) The input representations are expressive enough to represent the constraints. Disentangling these three factors is a future research challenge.

\section{Related Work}

\emph{Structured prediction} is an active field in machine learning and has numerous applications, including various kinds of sequence labeling tasks, parsing~\cite[e.g.,][]{Martins2009}, image segmentation~\cite[e.g.,][]{lam2015hc}, and information extraction~\cite[e.g.,][]{Anzaroot2014}. The work of \citet{Roth2004} introduced the idea of using explicitly stated constraints in an integer programming framework. That constraints and knowledge can improve models has been highlighted by several lines of work~\cite[e.g.,][]{Ganchev2010,Chang2012,hu2016harnessing}.

The interplay between constraints and representations has been sharply
highlighted by recent work on integrating neural networks with
structured outputs~\citep[e.g.,][and
others]{rocktaschel2017end,niculae2018sparsemap,manhaeve2018deepproblog,pmlr-v80-xu18h,li-srikumar-2019-augmenting,li-etal-2019-logic}. We
expect that constraints learned as described in this work can be integrated into these
formalisms, presenting an avenue for future research.

While our paper focuses on learning explicit constraints directly from examples, it is also possible to use indirect supervision from these examples to learn a structural classifier~\citep{Chang2010}, with an objective function penalizing invalid structures.

Related to our goal of learning constraints is \emph{rule learning}, as studied in various subfields of artificial intelligence. \citet{Quinlan1986} describes the ID3 algorithm, which extracts rules as a decision tree. First order logic rules can be learned from examples using inductive logic programming \citep{MUGGLETON1994629,Lavrac1994,Page2003}. Notable algorithms for inductive logic programming include FOIL \citep{Quinlan1990} and Progol \citep{Muggleton1995}.

\emph{Statistical relation learning} addresses learning constraints with uncertainty~\citep{Friedman1999,Getoor2001}. Markov logic networks~\citep{Richardson2006} combines probabilistic models with first order logic knowledge, whose weighted formulas are soft constraints and the weights can be learned from data. In contrast to these directions, in this paper, we exploit a novel representational result about rectifier networks to learn polytopes that represent constraints with off-the-shelf neural network tools.

\section{Conclusions}

We presented a systematic way for discovering constraints as linear inequalities for structured prediction problems. The proposed approach is built upon a novel transformation from two layer rectifier networks to linear inequality constraints and does not rely on domain expertise for any specific problem. Instead, it only uses general constraint features as inputs to rectifier networks. Our approach is particularly suited to tasks where designing constraints manually is hard, and/or the number of training examples is small. The learned constraints can be used for structured prediction problems in two ways: (1) combining them with an existing model to improve prediction performance, or (2) incorporating them into the training process to train a better model. We demonstrated the effectiveness of our approach on three NLP tasks, each with different original models.

\section*{Acknowledgments}

We thank members of the NLP group at the University of Utah, especially Jie Cao,
for their valuable insights and suggestions; and reviewers for pointers to
related works, corrections, and helpful comments. We also acknowledge the
support of NSF Cyberlearning-1822877, SaTC-1801446 and gifts from Google and
NVIDIA.

\bibliographystyle{acl_natbib}
\bibliography{main}

\appendix

\section{Proof of Theorem 1}
In this section we prove Theorem 1. The theorem and the relevant definitions are repeated here for convenience.

Define the rectifier (ReLU) activation function as $\R(x)= \max(0, x)$. Consider the following two-layer rectifier network:
\begin{equation}\label{relu_network2}
z(\mathbf{x,y}) = \sgn\Big(1- \sum_{k=1}^K \R \big(\mathbf{w}_k \cdot \boldsymbol{\psi}(\mathbf{x, y}) + b_k\big)\Big)
\end{equation}
The input to the network is still $\boldsymbol{\psi}(\mathbf{x, y})$. There are $K$ ReLUs in the hidden layer, and one threshold unit in the output layer.

The decision boundary of this rectifier network is specified by a system of linear inequalities. In particular, we have the following theorem:
\begin{theorem}\label{theorem2}
Consider a two-layer rectifier network with $K$ hidden ReLUs as in Eq.~\eqref{relu_network2}. Define the set $[K]=\{1,2,\dots,K\}$. The network outputs $z(\mathbf{x,y})=1$ if, and only if, for every subset $\mathcal{S}$ of $[K]$, the following linear inequality holds:
\begin{equation*}
1-\sum_{k \in \mathcal{S}}\big(\mathbf{w}_k \cdot \boldsymbol{\psi}(\mathbf{x, y}) + b_k\big) \ge 0
\end{equation*}
\end{theorem}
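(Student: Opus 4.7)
The plan is to reduce the statement to a clean combinatorial identity about sums of ReLU values. Fix an input and abbreviate $a_k := \mathbf{w}_k \cdot \boldsymbol{\psi}(\mathbf{x},\mathbf{y}) + b_k$. By the definition of $\sgn$ in the paper, the network output satisfies $z(\mathbf{x},\mathbf{y}) = 1$ if and only if
\[
1 - \sum_{k=1}^{K} \R(a_k) \;\ge\; 0,
\]
that is, $\sum_{k=1}^{K} \R(a_k) \le 1$. Similarly, the collection of subset inequalities in the theorem is equivalent to $\max_{\mathcal{S}\subseteq [K]} \sum_{k\in\mathcal{S}} a_k \le 1$. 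So the whole theorem reduces to proving the identity
\[
\sum_{k=1}^{K} \R(a_k) \;=\; \max_{\mathcal{S}\subseteq [K]} \sum_{k\in\mathcal{S}} a_k.
\]

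The identity itself is elementary. For the $\ge$ direction, let $\mathcal{S}^\star = \{k : a_k > 0\}$; then $\sum_{k\in\mathcal{S}^\star} a_k = \sum_k \R(a_k)$, so the maximum is at least the left-hand side (and trivially at least $0$ by taking $\mathcal{S} = \emptyset$, matching the empty-$\mathcal{S}$ inequality $1 \ge 0$). For the $\le$ direction, for any $\mathcal{S}\subseteq [K]$, $\sum_{k\in\mathcal{S}} a_k \le \sum_{k\in\mathcal{S}} \R(a_k) \le \sum_{k=1}^K \R(a_k)$, since $a_k \le \R(a_k)$ for every $k$ and $\R(a_k) \ge 0$.

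Combining, $z(\mathbf{x},\mathbf{y})=1$ $\iff$ $\sum_k \R(a_k) \le 1$ $\iff$ $\max_{\mathcal{S}} \sum_{k\in\mathcal{S}} a_k \le 1$ $\iff$ $1 - \sum_{k\in\mathcal{S}}(\mathbf{w}_k\cdot\boldsymbol{\psi}+b_k) \ge 0$ for all $\mathcal{S}\subseteq [K]$, which is exactly the statement.

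There is no real obstacle; the only step that requires a moment of thought is recognizing the ReLU-sum as a maximum over subset sums, and that follows from the definition $\R(t) = \max(0,t)$. The empty-subset case neatly explains why the trivial inequality $1 \ge 0$ appears among the $2^K$ conditions, and why the theorem genuinely asserts only $2^K - 1$ nontrivial linear inequalities, as the paper notes after the statement.
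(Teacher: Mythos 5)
Your proposal is correct and is essentially the paper's own argument: both directions hinge on the same two observations, namely that the subset of indices with nonnegative $a_k$ realizes $\sum_k \R(a_k)$ as a subset sum, and that $\sum_{k\in\mathcal{S}} a_k \le \sum_{k\in\mathcal{S}}\R(a_k) \le \sum_{k=1}^K \R(a_k)$ for every $\mathcal{S}$. Packaging these as the identity $\sum_{k}\R(a_k)=\max_{\mathcal{S}\subseteq[K]}\sum_{k\in\mathcal{S}}a_k$ is a slightly cleaner presentation, but not a different proof.
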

\begin{proof}
Define $a_k=\mathbf{w}_k \cdot \boldsymbol{\psi}(\mathbf{x, y}) + b_k$. We first prove the ``if'' part of the theorem. Suppose that for any $\sS\subseteq [K]$, $1-\sum_{k\in\sS}a_k \ge 0$. Thus for a specific subset $\sS^*= \{k \in [K]: a_k \ge 0\}$, we have $1-\sum_{k\in\sS^*}a_k \ge 0$.
By the definition of $\sS^*$, $\sum_{k=1}^K \rect(a_k) = \sum_{k \in \sS*} a_k$, therefore $1-\sum_{k=1}^K \rect(a_k) \ge 0$. 

Next we prove the ``only if'' part of the theorem. Suppose that $1-\sum_{k=1}^K \rect(a_k) \ge 0$. For any $\sS\subseteq [K]$, we have $\sum_{k=1}^K \rect(a_k) \ge \sum_{k\in \sS} \rect(a_k) \ge \sum_{k\in \sS} a_k$. Therefore, for any $\sS\subseteq [K]$, $1-\sum_{k\in\sS}a_k \ge 0$.
\end{proof}

\section{Synthetic Integer Linear Programming Experiments}

We first check if constraints are learnable, and whether learned constraints help a downstream task with a synthetic experiment.
Consider framing structure prediction as an integer linear program (ILP):
\begin{equation}
\begin{split}
\min_{\bz \in \{0, 1\}^n} \sum_{i} c_i \cdot z_i, \quad\quad\quad \\\\ \text{subject to} \quad \sum_i A_{ki}z_i \ge b_k,    \quad k\in[m]
\end{split}
\label{ilp-constraints}
\end{equation}

The objective coefficient $c_i$ denotes the cost of setting the variable $z_i$ to $1$ and the goal of prediction is to find a cost minimizing variable assignment subject to $m$ linear constraints in \eqref{ilp-constraints}.
We randomly generate a hundred 50-dimensional ILP instances, all of which share a fixed set of random constraints. Each instance is thus defined by its objective coefficients. We reserve $30\%$ of instances as test data. The goal is to learn the shared linear constraints in Eq.~\eqref{ilp-constraints} from the training set.

We use the Gurobi Optimizer~\citep{gurobi} to solve all the ILP instances to obtain pairs  $\{\mathbf{(c, z)}\}$, where $\mathbf{c}$ is the vector of objective coefficients and $\mathbf{z}$ is the optimal solution. 
Each $\mathbf{z}$  in this set is feasible, giving us positive examples $(\mathbf{z}, +1)$ for the constraint learning task.

Negative examples are generated as follows: Given a positive pair $\mathbf{(c, z)}$ described above, if the $i^\text{th}$ coefficient $c_i > 0$ and the corresponding decision $z_i = 1$, construct $\mathbf{z'}$ from $\mathbf{z}$ by flipping the $i^\text{th}$ bit in $\mathbf{z}$ from $1$ to $0$. Such a $\mathbf{z'}$ is a negative example for the constraint learning task because $\mathbf{z'}$ has a lower objective value than $\mathbf{z}$. Therefore, it violates at least one of the constraints in Eq.~\eqref{ilp-constraints}. Similarly, if $c_i < 0$ and $z_i = 0$, we can flip the $i^\text{th}$ bit from $0$ to $1$. We perform the above steps for every coefficient of every example in the training set to generate a set of negative examples $\{(\mathbf{z'}, -1)\}$.

We trained a rectifier network on these examples and converted the resulting parameters into a system of linear inequalities using Theorem~\ref{theorem2}. The hyper-parameters and design choices are summarized in the supplementary material. We used the learned inequalities to replace the original constraints to obtain predicted solutions. We evaluated these predicted solutions against the oracle solutions (i.e., based on the original constraints). We also computed a baseline solution for each test example by minimizing an unconstrained objective.

Table~\ref{table:synthetic-ilp} lists four measures of the effectiveness of learned constraints. First, we want to know whether the learned rectifier network can correctly predict the synthetically generated positive and negative examples. The binary classification accuracies are listed in the first row. The second row lists the bitwise accuracies of the predicted solutions based on learned constraints, compared with the gold solutions. We see that the accuracy values of the solutions based on learned constraints are in the range from $80.2$--$83.5\%$. As a comparison, without using any constraints, the accuracy of the baseline is $56.8\%$. Therefore the learned constraints can substantially improve the prediction accuracy in the down stream inference tasks. The third row lists the percentage of the predicted solutions satisfying the original constraints. Solutions based on learned constraints satisfy $69.8$--$74.4\%$ of the original constraints. In contrast, the baseline solutions satisfy $55.3\%$ of the original constraints. The last row lists the percentage of the gold solutions satisfying the learned constraints. We see that the gold solutions almost always satisfy the learned constraints.

\begin{table*}[htbp]
   \centering
   \begin{tabular}{lccccccccc} 
      \toprule
      \multicolumn{1}{c}{} & \multicolumn{9}{c}{Number of ReLUs} \\
      \cmidrule(r){2-10} 
      & 2 & 3 & 4 & 5 & 6 & 7 & 8 & 9 & 10\\
      \midrule
     binary classification acc. ($\%$) & 85.1 & 87.3 & 92.1 & 90.3 & 95.0 & 94.3 & 94.1 & 97.7 & 98.0 \\
      bitwise solution acc. ($\%$) & 81.1 & 80.9 & 81.9 & 80.2 & 81.0 & 82.3 & 81.1 & 83.2 & 83.5  \\
      original constr. satisfied ($\%$)& 70.3 & 69.8 & 72.7 & 70.4 & 70.1 & 71.1 & 71.4 & 74.4 & 74.3 \\
      learned constr. satisfied ($\%$)& 95.6 & 98.6 & 98.7 & 99.1 & 97.4 & 98.9 & 99.9 & 99.1 & 99.4 \\
      \bottomrule
   \end{tabular}
   \caption{Effectiveness of learned constraints for the synthetic ILP experiments.}
   \label{table:synthetic-ilp}
\end{table*}

The hyper-parameter and other design choices for
the synthetic ILP experiments are listed in Table~\ref{tab:ilp-para}.

\begin{table*}[htbp]
   \centering
   \begin{tabular}{@{} lr @{}} 
      \toprule
      Description & Value \\
      \midrule
  Total number of examples                                              & 100                                                \\
  Number of training examples                                           & 70                                                 \\
  Number of test examples                                               & 30                                                 \\
  Dimensionality                                                        & 50                                                 \\
  Range of hidden ReLU units considered for experiments                 & 2-10                                               \\
  Learning rates for cross-validation while learning rectifier networks & $\{0.001, 0.01, 0.1\}$                             \\
  Learning rate decay for cross-validation                              & $\{0.0, 10^{-7}, 10^{-6}\}$                        \\
  Optimizer parameters for learning                                     & $\beta_1 = 0.9, \beta_2 = 0.999, \epsilon=10^{-7}$ \\
  Number of training epochs                                             & 1000                                               \\
      \bottomrule
   \end{tabular}
   \caption{Parameters used in the synthetic ILP experiments}
   \label{tab:ilp-para}
\end{table*}

\section{Entity and relation extraction experiments}

\subsection{Designed constraints}

Table~\ref{table:designed_constraints} lists the designed constraints used in the entity and relation extraction experiments. There are fifteen constraints, three for each relation type. For example, the last row in Table~\ref{table:designed_constraints} means that the relation \lbl{OrgBasedIn} must have an \lbl{Organization} as its source entity and a \lbl{Location} as its target entity, and the relation in the opposite direction must be \lbl{NoRel}.

\begin{table*}[htbp]
  \centering
  \begin{tabular}{llll}
    \toprule
    \multicolumn{1}{c}{Antecedents} & \multicolumn{3}{c}{Consequents}                   \\
    \cmidrule(r){1-1}\cmidrule(l){2-4} 
    If the relation is     & Source must be     & Target must be & Reversed relation must be \\
    \cmidrule(r){1-1}\cmidrule(l){2-4} 
    \lbl{Kill} & \lbl{Person}  & \lbl{Person} & \lbl{NoRel}     \\
    \lbl{LiveIn}     & \lbl{Person} & \lbl{Location} & \lbl{NoRel}     \\
    \lbl{WorkFor}    & \lbl{Person}       & \lbl{Organization} & \lbl{NoRel}  \\
    \lbl{LocatedAt} & \lbl{Location} & \lbl{Location} & \lbl{NoRel} \\
    \lbl{OrgBasedIn} & \lbl{Organization} & \lbl{Location} & \lbl{NoRel} \\
    \bottomrule
  \end{tabular}
    \caption{Designed constraints used in the entity and relation extraction experiments}
  \label{table:designed_constraints}
\end{table*}

\subsection{Constraint features}

We use the same example as in the main paper to illustrate the constraint features used in the entity and relation extraction experiments:

\begin{quote}
[\underline{\lbl{Organization}} Google LLC] is headquartered in [\underline{\lbl{Location}} Mountain View, California, USA].
\end{quote}

In the above example, the relation from ``Google LLC'' to ``Mountain View, California, USA'' is \lbl{OrgBasedIn}, and the relation in the opposite direction is labeled \lbl{NoRel}, indicating there is no relation from ``Mountain View, California, USA'' to ``Google LLC''.

We used three constraint features for this task, explained as follows.

\paragraph{Source-relation indicator}
This feature looks at a given relation label and the label of its source entity. It is an indicator pair (source label, relation label). Our example sentence will contribute the following two feature vectors, (\lbl{Organization}, \lbl{OrgBasedIn}) and (\lbl{Location}, \lbl{NoRel}), both corresponding to postive examples. The negative examples contains all possible pairs of (source label, relation label), which do not appear in the positive example set.

\paragraph{Relation-target indicator}
This feature looks at a given relation label the label of its target entity. It is an indicator pair (relation label, target label). Our example sentence will contribute the following two feature vectors, (\lbl{OrgBasedIn}, \lbl{Location}) and (\lbl{NoRel},\lbl{Organization}), both corresponding to positive examples. The negative examples contains all possible pairs of (relation label, target label), which do not appear in the positive example set.

\paragraph{Relation-relation indicator}
This feature looks at a pair of entities and focuses on the two relation labels between them, one in each direction. Therefore our running example will give us two positive examples with features (\lbl{OrgBasedIn}, \lbl{NoRel}) and (\lbl{NoRel},\lbl{OrgBasedIn}). The negative examples contain any pair of relation labels that is not seen in the positive example set.

\subsection{Hyper-parameters and design choices}
The hyper-parameter and design choices for the experiments are in Table~\ref{tab:entity-relation-para}. Note that different runs of the SVM learner with the learned or designed constraints may give different results from those on Table~\ref{er}. This is due to non-determinism introduced by hardware and different versions of the Gurobi solver picking different solutions that have the same objective value. In the results in Table~\ref{er}, we show the results where the training with learned constraints seem to underperform the model that is trained with designed constraints. In other runs on different hardware, we found the opposite ordering of the results.

\begin{table*}[htbp]
   \centering
   \begin{tabular}{@{} lr @{}} 
      \toprule
      Description & Value \\
      \midrule
Structured SVM trade-off parameter for the base model                 & $2^{-6}$                                           \\
  Number of hidden ReLU units                                           &                                                    \\
  \quad --for source-relation                                      & 2                                                  \\
  \quad --for relation-target                                      & 2                                                  \\
  \quad --for relation-relation                                    & 1                                                  \\
  Learning rates for cross-validation while learning rectifier networks & $\{0.001, 0.01, 0.1\}$                             \\
  Learning rate decay for cross-validation                              & $\{0.0, 10^{-7}, 10^{-6}\}$                        \\
  Optimizer parameters for learning                                     & $\beta_1 = 0.9, \beta_2 = 0.999, \epsilon=10^{-7}$ \\
      \bottomrule
   \end{tabular}
   \caption{Parameters used in the entity and relation extraction experiments}
   \label{tab:entity-relation-para}
\end{table*}

\subsection{Learned Constraints}

We see in the main paper that $2^K-1$ linear inequality constraints are learned using a rectifier network with $K$ hidden units. In the entity and relation extraction experiments, we use two hidden units to learn three constraints from the source-relation indicator features. The three learned constraints are listed in Table~\ref{table:source-relation-constraints}. A given pair of source label and relation label satisfies the constraint if the sum of the corresponding coefficients and the bias term is greater than or equal to zero. For example, the constraint from the first row in Table~\ref{table:source-relation-constraints} disallows the pair (\lbl{Location}, \lbl{Kill}), because $-1.90-2.84+0.32<0$. Therefore, the learned constraint would not allow the source entity of a \lbl{Kill} relation to be a \lbl{Location}, which agrees with the designed constraints.

\begin{table*}[htbp]
  \centering
  \begin{tabular}{rrrrrrrrrrr}
    \toprule
    \multicolumn{4}{c}{Source Labels}  &  \multicolumn{6}{c}{Relation Labels}         \\
    \cmidrule(r){1-4} \cmidrule(lr){5-10}
    \lbl{NoEnt}     & \lbl{Per.}     & \lbl{Loc.} & \lbl{Org.} & \lbl{NoRel} & \lbl{Kill} & \lbl{Live} & \lbl{Work} & \lbl{Located} & \lbl{Based}  & Bias      \\
    \cmidrule(r){1-4} \cmidrule(lr){5-10} \cmidrule(l){11-11}
    -1.98 & 3.53  & -1.90 & 0.11 & 2.66 & -2.84 & -2.84 & -2.84 & 2.58 & 0.43 & 0.32    \\
    -1.61 & -1.48 & 3.50 & 0.92 & 1.15 & 1.02 & 1.02 & 1.02 & -3.96 & -1.38 & 1.46   \\
    -3.59 & 2.04  & 1.60 & 1.03 & 3.81 & -1.82 & -1.82 & -1.82 & -1.38 & -0.95 & 0.78  \\
    \bottomrule
  \end{tabular}
  \caption{Linear constraint coefficients learned from the source-relation indicator features}
  \label{table:source-relation-constraints}
\end{table*}

We enumerated all possible pairs of source label and relation label and found that the learned constraints always agree with the designed constraints in the following sense: whenever a pair of source label and relation label satisfies the designed constraints, it also satisfies all three learned constraints, and whenever a pair of source label and relation label is disallowed by the designed constraints, it violates at least one of the learned constraints. Therefore, our method of constraint learning exactly recovers the designed constraints.
\begin{table*}[htbp]
  \centering
  \small
  \begin{tabular}{rrrrrrrrrrrrr}
    \toprule
    \multicolumn{6}{c}{Forward Relation Labels}  &  \multicolumn{6}{c}{Backward Relation Labels} & \multicolumn{1}{c}{Bias}              \\
    \cmidrule(r){1-6} \cmidrule(rl){7-12}\cmidrule(l){13-13}
    4.95 & -1.65 & -1.65 & -1.65 & -1.65 & -1.65 & 5.06 & -1.53 & -1.53& -1.53& -1.53& -1.53 & -2.41 \\
    \bottomrule
  \end{tabular}
    \caption{Linear constraint coefficients learned from the relation-relation indicator features. The order of the relation labels is: \lbl{NoRel}, \lbl{Kill}, \lbl{LiveIn}, \lbl{WorkFor}, \lbl{LocatedAt}, and \lbl{OrgBasedIn}}
  \label{table:relation-relation-constraint}
\end{table*}

We also use two hidden units to learn three constraints from the relation-target indicator features, and one hidden unit to learn one constraint from the relation-relation indicator features. The learned constraints are listed in Table~\ref{table:relation-target-constraints} and Table~\ref{table:relation-relation-constraint}. Again we verify that the learned constraints exactly recover the designed constraints in all cases.

\begin{table*}[htbp]
  \centering
  \begin{tabular}{rrrrrrrrrrr}
    \toprule
    \multicolumn{6}{c}{Relation Labels}  &  \multicolumn{4}{c}{Target Labels}               \\
    \cmidrule(r){1-6} \cmidrule(lr){7-10}
    \lbl{NoRel} & \lbl{Kill} & \lbl{Live} & \lbl{Work} & \lbl{Located} & \lbl{Based} &
    \lbl{NoEnt}     & \lbl{Per.}     & \lbl{Loc.} & \lbl{Org.} &  Bias \\
    \cmidrule(r){1-6} \cmidrule(lr){7-10}\cmidrule(l){11-11}
    2.68 & -3.17 & -0.55 & 2.68 & -0.55 & -0.55 & -1.58 & 3.15 & 0.53 & -2.70 & 1.02  \\
    2.72 & 2.42 & -1.39 & -2.55 & -1.39 & -1.39 & -2.51 & -2.27 & 1.54 & 2.31 & 0.85 \\
    5.40 & -0.74 & -1.94 & 0.13 & -1.94 & -1.94 & -4.10 & 0.88 & 2.08 & -0.39 & 0.86 \\
    \bottomrule
  \end{tabular}
    \caption{Linear constraint coefficients learned from the relation-target indicator features}
  \label{table:relation-target-constraints}
\end{table*}

\section{Citation field extraction experiments}

\subsection{Constraint Features}
\label{supp:citation_cons}
We use the same example as in the main paper to illustrate the constraint features used in the citation field extraction experiments:

\begin{quote}
  [ \underline{\lbl{Author}} A . M . Turing . ] [ \underline{\lbl{Title}} Computing machinery and intelligence . ] [ \underline{\lbl{Journal}} Mind , ] [\underline{\lbl{Volume}} 59 , ] [ \underline{\lbl{Pages}} 433-460 . ] [ \underline{\lbl{Date}} October , 1950 . ]
\end{quote}
We explore multiple simple constraint features $\boldsymbol{\psi}(\mathbf{x, y})$ as described below.

\paragraph{Label existence}
This features indicates which labels exist in a citation entry. In our above example, there are six labels. Suppose there are $n_l$ possible labels. The above example is a positive example, the feature vector of which is an $n_l$-dimensional binary vector. Exactly six elements, corresponding to the six labels in the example, have the value 1 and all others have the value 0.
To obtain the negative examples, we iterate through every positive example and flip one bit of its feature vector. If the resulting vector is not seen in the positive set it will be a negative example.

\paragraph{Label counts}
Label-count features are similar to Label-existence features. Instead of indicating whether a label exists using 1 or 0, label-count features records the number of times each label appears in the citation entry. The positive examples can be generated naturally from the training set.
To generate negative examples, we perturb the actual labels of a positive example, as opposed to its feature vector. We then extract the label counts feature from the perturbed example, and treat it as negative if it has not seen before in the positive set.

\paragraph{Bigram labels}
This feature considers each pair of adjacent labels in the text. From left to right, the above example will give us feature vectors like (\lbl{Author}, \lbl{Author}), (\lbl{Author}, \lbl{Title}), (\lbl{Title}, \lbl{Title}), \dots, (\lbl{Date}, \lbl{Date}). We then use one-hot encoding to represent these features, which is the input vector to the rectifier network. All these feature vectors are labeled as positve (+1) by the rectifier network, since they are generated from the training set. 
To generate negative examples for bigram-label features, we generate all positive examples from the training set, then enumerate all possible pair of labels and select those that were not seen in the positive examples.

\paragraph{Trigram labels}
This feature is similar to the bigram labels. From the training set, we generate positive examples, e.g., (\lbl{Author}, \lbl{Author}, \lbl{Author}), (\lbl{Author}, \lbl{Author}, \lbl{Title}) etc, and convert them into one-hot encodings. 
For negative examples, we enumerate all possible trigram labels, and select those trigrams as negative if two conditions are met: (a) the trigram is not seen in the positive set; and (b) a bigram contained in it is seen in the training set. The intuition is that we want negative examples to be almost feasible.

\paragraph{Part-of-speech}
For a fixed window size, we extract part-of-speech tags and the corresponding labels, and use the combination as our constraint features. For example, with window size two, we get indicators for ($\lbl{tag}_{i-1}$, $\lbl{tag}_i$, $\lbl{label}_{i-1}$, $\lbl{label}_i$) for the $i^\text{th}$ token in the sentence, where $\lbl{tag}$ and $\lbl{label}$ refer to part-of-speech tag and citation field label respectively.
For negative examples, we enumerate all four-tuples as above, and select it as negative if the four-tuple is not seen in the positive set, but both  ($\lbl{tag}_{i-1}$, $\lbl{tag}_i$) and ($\lbl{label}_{i-1}$, $\lbl{label}_i$) are seen in the training set.

\paragraph{Punctuation}
The punctuation feature is similar to the part-of-speech feature. Instead of the POS tag, we use an indicator for whether the current token is a punctuation.

\subsection{Hyper-parameters and design choices}
The hyper-parameter and design choices for the experiments are in
the Table~\ref{tab:citation-para}.

\begin{table*}[htbp]
   \centering
   \begin{tabular}{@{} lr @{}} 
      \toprule
      Description & Value \\
      \midrule
  Structured SVM trade-off parameter for the base model                 & unregularized                                      \\
  Beam size                                                             & 50                                                 \\
  Number of hidden ReLU units for experiments                           & 10                                                 \\
  Learning rates for cross-validation while learning rectifier networks & $\{0.001, 0.01, 0.1\}$                             \\
  Learning rate decay for cross-validation                              & $\{0.0, 10^{-7}, 10^{-6}\}$                        \\
  Optimizer parameters for learning                                     & $\beta_1 = 0.9, \beta_2 = 0.999, \epsilon=10^{-7}$ \\
      \bottomrule
   \end{tabular}
   \caption{Parameters used in the citation field extraction experiments}
   \label{tab:citation-para}
\end{table*}

\section{Chunking Experiments}
\subsection{Constraint Features}
The two constraints which we discussed in the main paper for the chunking dataset are described below.
\paragraph{N-gram label existence}  This constraint is a general form of the label existence constraint mentioned in Section \ref{supp:citation_cons}. In fact, it is the n-gram label existence constraint with n=1. The n-gram label existence constraint represents the labels of a sequence as a binary vector. Each feature of this binary vector corresponds to an n-gram label combination. Hence, the length of this constraint feature will be $\mid l\mid^n$ where $\mid l\mid$ is the total number of distinct labels. This means the vector size of this constraint grows exponentially with increasing $n$. The binary vector indicates a value of 1 for all the n-gram label features present in the sequence tags. The positive examples are hence formed from the training set sequences. For the negative examples, we iterate through each positive example and flip a bit. The resulting vector is incorporated as a negative example if it doesn't occur in the training set. 
\paragraph{N-gram part of speech (POS)} This constraint is a general form of the part of speech constraint mentioned in Section \ref{supp:citation_cons}. POS tags of a token are converted to a indicator vector. We concatenate the indicator vectors of each gram in an n-gram in order and this vector is further concatenated with indicators of labels of each of these grams. Hence, for n=2, we get the constraint vector as   ($\lbl{tag}_{i-1}$, $\lbl{tag}_i$, $\lbl{label}_{i-1}$, $\lbl{label}_i$) where $\lbl{tag}_i$ and $\lbl{label}_i$ are indicators for POS tags and labels respectively for the $i^{th}$ token. The positive examples enumerate vectors for all existing n-grams in the training sequences. The negative examples are creating by changing a label indicator in the constraint feature. The label to be perturbed and the perturbation both are chosen at random. The constraint vector hence formed is incorporated as a negative example if it doesn't occur in the set of positive examples.

\subsection{Hyper-parameters and design choices}
The hyper-parameter and design choices are summarized in Table \ref{tab:chunk-para}.
\begin{table*}[htbp]
   \centering
   \begin{tabular}{@{} lr @{}} 
       \toprule
      Description & Value \\
      \midrule          
      Constraint Rectifier Network \\
      \midrule
  Range of hidden ReLU units considered for experiments                 & $\{5,10\}$                                               \\
  Learning rates for development while learning rectifier networks & $\{0.001, 0.005, 0.01, 10^{-4}\}$                             \\
  Number of training epochs                                             & 1000                                               \\
  Random Seeds                                                          & $\{1,2\}$ \\
  \midrule
  BiLSTM CRF Model\\
  \midrule
  Learning rate for development while learning baseline model & $\{0.01,0.05,0.001,0.005\}$ \\
  Learning Rate Decay                                         & $\{10^{-5},10^{-6}\}$ \\
  Beam Size                                                   & 10 \\
  Number of training epochs                                   & 300 \\
   \bottomrule
   \end{tabular}
   \caption{Parameters used in the chunking experiments}
   \label{tab:chunk-para}
\end{table*}


\end{document}